\begin{document}

\title{Streaming View Learning}

\author{\name Chang Xu \email xuchang@pku.edu.cn \\
 \addr  Key Laboratory of Machine Perception (Ministry of Education)\\
       School of Electronics Engineering and Computer Science\\
       Peking University\\
        Beijing 100871, China
      \AND
       \name Dacheng Tao \email dacheng.tao@uts.edu.au \\
       \addr  Centre for Quantum Computation and Intelligent Systems\\
       Faculty of Engineering and Information Technology\\
       University of Technology, Sydney\\
       Sydney, NSW 2007, Australia
       \AND
       \name Chao Xu \email xuchao@cis.pku.edu.cn\\
       \addr  Key Laboratory of Machine Perception (Ministry of Education)\\
       School of Electronics Engineering and Computer Science\\
       Peking University\\
        Beijing 100871, China
}

%\editor{xx}

\maketitle

\begin{abstract}
An underlying assumption in conventional multi-view learning algorithms is that all  views can be simultaneously accessed. However, due to various factors when collecting and pre-processing  data from different views, the streaming view setting, in which  views arrive in a streaming manner, is becoming more common. By assuming that the subspaces of a multi-view model trained over past views are stable, here we  fine tune their combination weights such that the well-trained multi-view model  is compatible with new views. This largely overcomes the burden of learning new view functions and updating past view functions. We theoretically examine convergence issues and the influence of streaming views in the proposed algorithm.  Experimental results on real-world datasets suggest that studying the streaming views problem in multi-view learning is significant and that the proposed algorithm can effectively handle streaming views in different applications.
\end{abstract}

\begin{keywords}
Multi-view Learning, Streaming Views
\end{keywords}

\section{Introduction}

In this era of exponential information growth, it is now possible to collect abundant data from different  sources to perform diverse tasks, such as social computing, environmental analysis, and disease prediction. These data are usually heterogeneous and possess distinct physical properties such that they can be categorized into different groups, each of which is then regarded as a particular view in multi-view learning. For example, in video surveillance \citep{wang2013intelligent}, placing multiple cameras at different positions around one area might enable better surveillance of that area in terms of accuracy and reliability. In another example, to accurately recommend  products to target customers \citep{jin2005maximum}, it is necessary to comprehensively describe the product by its image, brand, supplier, sales history, and user feedback. Effective descriptors have already been developed for object motion recognition: (i) histograms of oriented gradients (HOG) \citep{dalal2005histograms}, which focus on static appearance information; (ii) histograms of optical flow (HOF) \citep{laptev2008learning}, which capture absolute motion information; and (iii) motion boundary histograms (MBH) \citep{dalal2006human}, which encode related motion between pixels.

Rather than requiring that all the examples should be comprehensively  described based on each individual view, it might be better to exploit the connections and differences between multiple views to better represent examples. A number of multi-view learning algorithms \citep{blum1998combining,lanckriet2004learning,jia2010factorized,chen2012large,chaudhuri2009multi} have thus emerged to effectively fuse multiple features from different views. These have been widely applied to various computer vision and intelligent system problems.

Although an optimal description of the data might be obtained by integration of multiple views,  in practice it is difficult to guarantee that all the candidate views can be  accessed simultaneously. For example, establishing a camera network for video surveillance is a huge project that takes time to realize. The number of views used in tracking and detection has increased. Newly developed recommendation systems might well have  their images and text descriptions in place, but they  require a period of time to accumulate sales  and, therefore,  user feedback, which  are  key factors influencing the decisions of prospective customers. Images can be depicted by diverse visual features with distinct acquisition costs. For example, a few milliseconds might be sufficient  to extract the color histogram or SIFT descriptors from a normal-sized image, but time-cost clustering and mapping processes are further required to generate bag-of-word (BoW) features from SIFT descriptors.  Recent deep learning methods need longer time (usually hours or even days) to obtain a reasonable model for image feature extraction.

Conventional multi-view learning algorithms \citep{cai2013multi,kumar2011learning} have been developed in ideal settings in which all the views  are  accessed simultaneously. The real world, however, presents a more challenging multi-view learning scenario formed from multiple streaming views.  Newly arrived views might contain fresh and timely information, that are beneficial for further improving the multi-view learning performance. To make existing multi-view learning methods applicable to this streaming view setting, a naive approach  might be to treat  a new view arriving as a new stage each time and then running the multi-view learning algorithms again with the new views. However, this approach is likely to suffer from intensive computational costs or serious performance degradation. In contrast, here we propose an effective streaming view learning algorithm that assumes the view function subspaces in the well-trained multi-view model over sufficient past views are stable and fine tunes their combination weights for an efficient model update. We provide theoretical analyses to support the feasibility of the proposed algorithm in terms of convergence and estimation error. Experimental results in real-world clustering and classification applications demonstrate the practical significance of investigating streaming views in the multi-view learning problem and the effectiveness of the proposed algorithm.

\section{Problem Formulation}

In the standard multi-view learning setting, we are provided with $n$ examples of $m$ views $\{(x_{i}^{1}, \cdots,  x_{i}^{m})\}_{i=1}^{n}$, where $x_{i}^{v}\in\mathbb{R}^{D_{v}}$ is the feature vector on the $v$-th view of the $i$-th example. The feature matrices on different views are thus denoted as $\{X^{1}, \cdots, X^{m}\}$, where $X^{v}\in \mathbb{R}^{D_{v}\times n}$. Subspace-based multi-view learning approaches aim to discover a subspace shared by multiple views, such that the information from multiple views can be integrated in that subspace:
\begin{equation}
(f^1, \cdots, f^m): (x_{i}^1, \cdots, x_{i}^{m})\rightarrow z_{i},
\end{equation}
where $f^{v}$ is the view function on the $v$-th view, and $z_{i}\in \mathbb{R}^{d}$ is the latent representation in the subspace $\mathcal{Z}$. Based on the unified representation $z$ of the multi-view example, the subsequent tasks, including classification, regression, and clustering, can  easily be  accomplished. 

Most existing multi-view learning algorithms explicitly assume that all views $\{X^{1}, \cdots, X^{m}\}$ are static and can be simultaneously accessed for  multi-view model learning. If  new views $\{X^{m+1}, \cdots, X^{m+k}\}$ are  provided, the question arises of how to upgrade the well-trained multi-view model $(f^{1}, \cdots, f^{m})$ over the past $m$ views using the latest information. It is unreasonable to simply neglect the newly arrived views and ignore the possibility of updating  the model. On the other hand, naively maintaining a training pool composed of all views, enriching the pool with each newly arrived view, and then re-launching the multi-view learning algorithm will be resource (storage and computation) consuming. It is, therefore, necessary to investigate this challenging multi-view learning problem in the general streaming view setting, where multiple views arrive in a streaming format. 

\subsection{Streaming View Learning}
In this section, we first present a naive approach to handle new views. We then develop a sophisticated streaming view learning algorithm that reduces the burden of learning new views.

\subsubsection{A Naive Approach}
Assume that multiple example views $\{x^{1}, \cdots, x^{m}\}$ are generated from a latent data point $z$ in the subspace, 
\begin{equation}\label{eq:view_func}
x^{v} = f^{v}(z) = W_{v}z,
\end{equation}
where view function $f^{v}$ parameterized by $W_{v}\in\mathbb{R}^{D_{v}\times d}$ can be assumed to be linear for simplicity. In practice, different feature dimensions are usually correlated; for example, distinct image tags  in BoW features might be related to each other. It is thus reasonable to encourage a low rank of $W_{v}$. Moreover, the low-rank $\{W_{v}\}_{v=1}^{m}$ implies that the latent subspace contains comprehensive information to generate multiple view spaces while the inverse procedure is infeasible, which is consistent with our assumption that multiple views are generated from a latent subspace.

Within the framework of empirical risk minimization,  view functions $W=\{W_{1}, \cdots, W_{m}\}$ can be solved with the following problem:
{\small
\begin{equation}\label{eq:batch}
\min_{W, z} \; \frac{1}{nm}\sum_{i=1}^{n}\sum_{v=1}^{m}\|x_{i}^{v}-W_{v}z_{i}\|_{2}^{2} + C_{1}\sum_{v=1}^{m}\|W_{v}\|_{*} + C_{2}\sum_{i=1}^{n}\|z_{i}\|_{2}^{2},
\end{equation}
}\noindent
where a least-squared loss is employed to measure the reconstruction error of multi-view examples, and a trace norm is applied to regularize view functions.

Suppose that, by solving Problem (\ref{eq:batch}), we have already a well-trained multi-view model $\{W_{1}, \cdots, W_{m}\}$ over $m$ views $\{X^{1}. \cdots, X^{m}\}$. Considering a new arriving view $X^{m+1}$, we are then faced with challenging problems of  how to discover the view function $W_{m+1}$ for the new view and how to upgrade the view functions $\{W_{1}, \cdots, W_{m}\}$ on past $m$ views. It is a straightforward extension to simultaneously handle more than one new view. 

Within the framework of Eq. (\ref{eq:batch}), the latent representations $\{z_{i}\}_{i=1}^{n}$ previously learned over $m$ past views are regarded as fixed. Then, the new view function $W_{m+1}$ can be efficiently solved by 
\begin{equation}
\min_{W_{m+1}} \; \frac{1}{n}\sum_{i=1}^{n}\|x_{i}^{m+1}-W_{m+1}z_{i}\|_{2}^{2} + C_{1}\|W_{m+1}\|_{*}.
\end{equation}
Problem (\ref{eq:batch}) can then be naturally adapted to $m+1$ views, and alternately optimizing view functions $\{W_{v}\}_{v=1}^{m+1}$ and subspace representations $\{z_{i}\}_{i=1}^{n}$ for several iterations will output the optimal multi-view model.

This naive approach to handling streaming views can be treated as a stochastic optimization strategy for solving Problem (\ref{eq:batch}). The view function for the new view can be efficiently discovered with the help of latent representations learned on past views; however, it is computationally expensive to upgrade the view functions on past views by re-launching Problem (\ref{eq:batch}), especially when the number of  views and the view function dimensions are large. 

\subsubsection{Streaming View Learning}

We begin the development of the streaming view learning algorithm by carefully investigating the view function. Note that any matrix $W_{v}\in \mathbb{R}^{D_{v}\times d}$ can be represented as the sum of rake-one matrices:
\begin{equation}\label{eq:svd_view_func}
W_{v} = \sum_{ij}\sigma_{ij}^{v}a_{i}^{v}(b_{j}^{v})^{T},
\end{equation}
where $span(a_{1}^{v}, a_{2}^{v}, \cdots)=\mathbb{R}^{D_{v}}$ and $span(b_{1}^{v}, b_{2}^{v}, \cdots)=\mathbb{R}^{d}$, and $\{\sigma^{v}_{ij}\}$ are the  coefficients to combine different subspaces.

%Since $W_{v}$ has been assumed to be low rank (i.e. $k_{v}\leq \min(D_{v}, d)$), $\sigma^{v}$ is encouraged to be sparse. Hence, in solving the view function $W_{v}$, the aim is to eliminate the rank one subspaces which have zero weights and focus on the remaining active rank one subspaces.

Based on the new formulation of the view function in Eq. (\ref{eq:svd_view_func}), Problem (\ref{eq:batch}) can be reformulated as
{\small
\begin{equation}\label{eq:new_batch}
\begin{split}
\min \; & \frac{1}{nm}\sum_{i=1}^{n}\sum_{v=1}^{m}\|x_{i}^{v}-A_{v}S_{v}B_{v}^{T}z_{i}\|_{2}^{2} + C_{1}\sum_{v=1}^{m}\|S_{v}\|_{*} + C_{2}\sum_{i=1}^{n}\|z_{i}\|_{2}^{2}\\
\text{w.r.t.} & \; \forall v\; A_{v}\in \mathbb{R}^{D_{v}\times k_{v}}, B_{v}\in \mathbb{R}^{d\times k_{v}}, S_{v} \in\mathbb{R}^{k_{v}\times k_{v}}; \forall i\; z_{i}\in\mathbb{R}^{d}\\
\text{s.t.}  &\; \forall v \; A_{v}^{T}A_{v} = I, \quad B_{v}^{T}B_{v}=I, 
\end{split}
\end{equation}
}\noindent
where $A_{v}$ and $B_{v}$ correspond to the  column and row spaces of $W_{v}$ respectively,  $S_{v}$ contains the weights to combine different rank-one subspaces, and $k_{v}$ indicates the number of active function subspaces on the $v$-th view.

Suppose that we already have  well trained view functions $\{(A_{v}, B_{v}, S_{v})\}_{v=1}^{m}$ over $m$ views. For the new $(m+1)$-th view, we can efficiently discover its view function $(A_{m+1}, B_{m+1}, S_{m+1})$ given the fixed latent representations $\{z_{i}\}_{i=1}^{n}$, 
{\small
\begin{equation}
\begin{split}
\min &\; \frac{1}{n}\sum_{i=1}^{n}\|x_{i}^{m+1}-A_{m+1}S_{m+1}B_{m+1}^{T}z_{i}\|_{2}^{2} + C_{1}\|S_{m+1}\|_{*} \\
\text{w.r.t.} & \quad A_{m+1} \in \mathbb{R}^{D_{m+1}\times k_{m+1}},  \; B_{m+1}\in \mathbb{R}^{d\times k_{m+1}},  \\
& \quad \;  S_{m+1} \in\mathbb{R}^{k_{m+1}\times k_{m+1}}\\
\text{s.t.}  &\quad A_{m+1}^{T}A_{m+1} = I, \quad B_{m+1}^{T}B_{m+1}=I.
\end{split}
\end{equation}
}\noindent
The remaining task is  to then upgrade the view functions on past $m$ views using the latest information. As mentioned above, completely re-training the model on past views is computationally expensive since a large number of variables need to be learned. Instead,  we propose to fine-tune the previously well-trained multi-view model using the following objective function:
{\small
\begin{equation}\label{eq:new_sz}
\begin{split}
\min\; & \frac{1}{n(m+1)}\sum_{i=1}^{n}\sum_{v=1}^{m+1}\|x_{i}^{v}-A_{v}S_{v}B_{v}^{T}z_{i}\|_{2}^{2} \\
&\quad \quad \quad \quad \quad  + C_{1}\sum_{v=1}^{m}\|S_{v}\|_{*} + C_{2}\sum_{i=1}^{n}\|z_{i}\|_{2}^{2}\\
\text{w.r.t.} & \; \forall v \; S_{v} \in\mathbb{R}^{k_{v}\times k_{v}}; \quad \forall i\; z_{i}\in\mathbb{R}^{d},\\
\end{split}
\end{equation}
}\noindent
where we have fixed the row and column spaces of view functions on multiple views and attempted to update view functions by adjusting their coefficients for  subspace combination. Since the view functions are now mainly determined by a set of smaller matrices $\{S_{v}\}_{v=1}^{m+1}$, where $S_{v}\in\mathbb{R}^{k_{v}\times k_{v}}$ with $k_{v}\ll \min(D_{v},d)$, solving Problem (\ref{eq:new_sz}) is often much cheaper than solving Problem (\ref{eq:new_batch}) or  (\ref{eq:batch}) with $m+1$ views. 

After solving or updating the view functions on $(m+1)$ views, the multi-view model  can then  process another new view. Meanwhile, the current multi-view model can be used to  predict the latent representation of a new multi-view example followed by  subsequent tasks.

\section{Optimization}

The proposed streaming view learning algorithm involves  optimization over latent representations $z=\{z_{i}\}_{i=1}^{m}$ and function subspaces on multiple views $\{(A_{v}, B_{v})\}_{v=1}^{m}$ and their corresponding combination weights $\{S_{v}\}_{v=1}^{m}$. In this section, we employ an alternating minimization strategy to optimize these variables. The whole optimization procedure is summarized in Algorithm 1.

\subsection{Optimization Over Latent Representations}

Fixing view functions $\{W_{v}=A_{v}S_{v}B_{v}^{T}\}_{v=1}^{m}$ on multiple views, the optimization problem w.r.t. the latent representation of the $i$-th example is 
\begin{equation}\label{eq:pro_zi}
\min_{z_{i}} \; \frac{1}{nm}\sum_{v=1}^{m}\|x_{i}^{v}-W_{v}z_{i}\|_{2}^{2} + C_{2}\|z_{i}\|_{2}^{2},
\end{equation}
which is easy to solve in a closed form.

\subsection{Optimization Over View Function Subspaces}
By fixing the latent representations, the view functions on multiple views can be independently optimized via
\begin{equation}
\min_{W_{v}} \; g(W_{v})+ C_{1}\|W_{v}\|_{*},
\end{equation}
where 
\begin{equation}
g(W_{v}) = \frac{1}{nm}  \sum_{i=1}^{n}\|x_{i}^{v}-W_{v}z_{i}\|_{2}^{2}.
\end{equation}
The proximal gradient descent method \citep{ji2009accelerated} has been widely used to solve this problem by reformulating it to,
{\small
\begin{equation}\label{eq:proximal}
\min_{W_{v}} \; \frac{\eta_{t}}{2}\|W_{v}-\big( W_{v}^{t-1} - \frac{1}{\eta_{t}}\nabla g(W_{v}^{t-1})\big) \|_{F}^{2} + C_{1}\|W_{v}\|_{*},
\end{equation}
}\noindent
where $\eta_{t}$ is the step size in the $t$-th iteration. It turns out that Problem (\ref{eq:proximal}) can be solved by singular value thresholding (SVT) \citep{cai2010singular},
\begin{equation}\label{eq:soft}
W_{v}^{t} = soft\big(W_{v}^{t-1} - \frac{1}{\eta_{t}}\nabla g(W_{v}^{t-1}), \frac{C_{1}}{\eta_{t}}\big),
\end{equation}
where $soft(X, C) = A(\Sigma-CI)_{+}B^{T}$ with singular value decomposition $X=A\Sigma B^{T}$ for $X$.

By operating Eq. (\ref{eq:soft}), we can obtain the view function subspace. However, Eq. (\ref{eq:soft}) requires accurate SVD over $W_{v}^{t-1} - \frac{1}{\eta_{t}}\nabla g(W_{v}^{t-1})$, which is computationally expensive given the large dimension of $W_{v}$. Recall that this step is only used to identify the view function subspaces, and the view function is more accurately discovered by optimizing the combination weight. Therefore, it is unnecessary to compute the SVD of $W_{v}^{t-1} - \frac{1}{\eta_{t}}\nabla g(W_{v}^{t-1})$ very accurately. We apply the power method \citep{halko2011finding} with several iterations to approximately calculate $W_{v}^{t-1} - \frac{1}{\eta_{t}}\nabla g(W_{v}^{t-1})\approx\widehat{A}_{v}\widehat{\Sigma}_{v}\widehat{B}_{v}^{T}$.

We initialize the optimization method by $(W_{v})_{0}=X_{v}Z^{T}(ZZ^{T})^{-1}$, and assume $(W_{v})_{0} = {A}_{v}\Sigma_{v}{B}_{v}^{T}$ is the reduced SVD of $(W_{v})_{0}$, where ${A}_{v} \in \mathbb{R}^{D_{v}\times k_{v}}$, ${B}_{v} \in \mathbb{R}^{d\times k_{v}}$ and $\Sigma_{v}\in \mathbb{R}^{k_{v}\times k_{v}}$ is diagonal. At each iteration, we calculate $W_{v}^{t-1} - \frac{1}{\eta_{t}}\nabla g(W_{v}^{t-1})\approx\widehat{A}_{v}\widehat{\Sigma}_{v}\widehat{B}_{v}^{T}$ using the cheaper power method and then filter out the singular vectors $\{\widetilde{A}_{v},\widetilde{B}_{v}\}$ with singular values greater than $C_{1}/\eta_{t}$. The column and row function subspaces can thus be discovered as the orthonormal bases of $span(A_{v}, \widetilde{A}_{v})$ and $span(B_{v}, \widetilde{B}_{v})$, respectively.

\subsection{Optimization Over Combination Weights}

Fixing the latent representations and the discovered view function subspaces, the optimization problem w.r.t. the combination weight $S_{v}$ on the $v$-th view is
\begin{equation}
\min_{S_{v}} \; h(S_{v})+ C_{1}\|S_{v}\|_{*},
\end{equation}
where 
\begin{equation}
h(W_{v}) = \frac{1}{nm}  \sum_{i=1}^{n}\|x_{i}^{v}-A_{v}S_{v}B_{v}^{T}z_{i}\|_{2}^{2}.
\end{equation}
Similarly, the proximal gradient technique can be applied to obtain an equivalent objective function,
{\small
\begin{equation}\label{eq:pro_sv}
\min_{S_{v}} \; \frac{\eta_{t}}{2}\|S_{v}-\big( S_{v}^{t-1} - \frac{1}{\eta_{t}}\nabla h(S_{v}^{t-1})\big) \|_{F}^{2} + C_{1}\|S_{v}\|_{*}.
\end{equation}
}\noindent
Since $S_{v}$ is a small $k_{v}\times k_{v}$ matrix, using the SVT method with an exact SVD operation to solve $S_{v}$ is feasible. 

\begin{algorithm}[tb]
   \caption{Streaming View Learning}
   \label{alg:lrml}
\begin{algorithmic}
   \STATE {\bfseries Input:} $\{X^{v}\}_{v=1}^{m+1}$, $\{z_{i}\}_{i=1}^{n}$, $\{A_{v}, S_{v}, B_{v}\}_{v=1}^{m}$

   \STATE {\bfseries PART 1- Solve new view function:}
   \STATE {\bfseries Initialize} $X_{v}Z^{T}(ZZ^{T})^{-1}={A}_{v}\Sigma_{v}{B}_{v}^{T}$, $v=m+1$
   \FOR {$t=1, \cdots, $}
    \STATE $(\widehat{A}_{v},\widehat{\Sigma}_{v},\widehat{B}_{v}^{T}) \leftarrow Power\big(W_{v}^{t-1} - \frac{1}{\eta_{t}}\nabla g(W_{v}^{t-1})\big)$
    \STATE $(\widetilde{A}_{v},\widetilde{B}_{v}) \leftarrow SVT(\widehat{A}_{v}\widehat{\Sigma}_{v}\widehat{B}_{v}^{T}, C_{1}/\eta_{t})$
    \STATE $A_{v}\leftarrow QR([A_{v}, \widetilde{A}_{v}])$ and $B_{v}\leftarrow QR([B_{v}, \widetilde{B}_{v}])$
    \STATE Solve $S_{v}$ via Problem (\ref{eq:pro_sv})
    \STATE $(A_{v}^{'}, S_{v}^{'}, B_{v}^{'}) \leftarrow SVD(S_{v})$
    \STATE $A_{v}\leftarrow A_{v}A_{v}^{'}$, $S_{v}\leftarrow S_{v}S_{v}^{'}$, $B_{v}\leftarrow B_{v}B_{v}^{'}$
    \STATE $W_{v}^{t}\leftarrow A_{v}S_{v}B_{v}^{T}$
   \ENDFOR
   \STATE {\bfseries PART 2- Upgrade view functions:}
   \FOR {$t=1, \cdots, $}
           \STATE $\forall \; i$ Solve $z_{i}$ via Problem (\ref{eq:pro_zi})
           \STATE $\forall \; v$ Solve $S_{v}$ via Problem (\ref{eq:pro_sv})
    \ENDFOR
   \STATE {\bfseries Return $\{A_{v}, S_{v}, B_{v}\}_{v=1}^{m+1}$}
\end{algorithmic}
\end{algorithm}

\section{Theoretical Analysis}
\label{sec:theory}
Here we conduct a theoretical analysis to reveal important properties of the proposed streaming view learning algorithm.

We use the following theorem to show that the latent representations $Z=[z_{1}, \cdots, z_{n}]$ become increasingly stable as streaming view learning progresses.
\begin{theorem}\label{the:z}
Given the latent representations $Z^{m-1}$ learned over $m-1$ views, and $Z^{m}$ learned over past $m-1$ views and the new $m$-th view (i.e.  $m$ views in total), we have $\|Z^{m}-Z^{m-1}\|_{F}=\mathcal{O}(1/m)$.
\end{theorem}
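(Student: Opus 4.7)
My plan is to exploit the fact that, with the view functions $\{W_v = A_v S_v B_v^T\}$ held fixed, Problem~(\ref{eq:pro_zi}) is a strongly convex quadratic in $z_i$ whose minimizer is available in closed form: setting the gradient to zero yields
\begin{equation*}
z_i^m = (H^m)^{-1} b_i^m, \quad H^m := \frac{1}{nm}\sum_{v=1}^m W_v^{T} W_v + C_2 I, \quad b_i^m := \frac{1}{nm}\sum_{v=1}^m W_v^{T} x_i^v,
\end{equation*}
with the analogous expression for $z_i^{m-1}$ (summing up to $v=m-1$ and rescaling by $n(m-1)$ in place of $nm$). The whole theorem then reduces to a perturbation bound on the map $(H,b)\mapsto H^{-1}b$.

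Next I would apply the resolvent identity
\begin{equation*}
z_i^m - z_i^{m-1} = (H^m)^{-1}(b_i^m - b_i^{m-1}) + (H^m)^{-1}(H^{m-1} - H^m)(H^{m-1})^{-1} b_i^{m-1}
\end{equation*}
and bound each piece separately. The core ingredient is the elementary identity
\begin{equation*}
\frac{1}{nm}\sum_{v=1}^m a_v - \frac{1}{n(m-1)}\sum_{v=1}^{m-1} a_v \; = \; \frac{a_m}{nm} - \frac{1}{nm(m-1)}\sum_{v=1}^{m-1} a_v,
\end{equation*}
whose norm is at most $2K/(nm)$ whenever $\|a_v\|\le K$ uniformly. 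Applying this to $a_v = W_v^{T} x_i^v$ and to $a_v = W_v^{T} W_v$, and combining with $\|(H^m)^{-1}\|_{\mathrm{op}},\|(H^{m-1})^{-1}\|_{\mathrm{op}} \le 1/C_2$ (since both matrices dominate $C_2 I$) together with the boundedness of $b_i^{m-1}$, each term in the decomposition is $\mathcal{O}(1/m)$. Summing the squared norms over the $n$ examples then gives $\|Z^m - Z^{m-1}\|_F \le \sqrt{n}\cdot\mathcal{O}(1/m) = \mathcal{O}(1/m)$, since $n$ is held fixed.

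The main obstacle is conceptual rather than computational. First, the theorem must be read as comparing the $z$-subproblem minimizers at a single alternating step, with $\{W_v\}_{v=1}^{m-1}$ frozen and the newly learned $W_m$ taken as given; otherwise one must also track how updating $\{S_v\}$ via Problem~(\ref{eq:new_sz}) perturbs the $W_v$'s, which would require a joint fixed-point argument. Second, one needs uniform bounds $\|W_v\|_{\mathrm{op}}\le K_W$ and $\|x_i^v\|_2 \le K_x$ across all views; the latter is a natural assumption on the data, while the former should be derived from the trace-norm regularizer evaluated at the optimum of Problem~(\ref{eq:batch})---that derivation is the step I would need to make explicit in the preamble to the theorem.
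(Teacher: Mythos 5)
Your proposal is correct, but it takes a genuinely different route from the paper's proof. The paper argues abstractly: writing $J_{m}(Z)=\frac{1}{m}\sum_{v=1}^{m}\ell(X^{v},W^{v},Z)+C_{2}\|Z\|_{F}^{2}$, it notes that the drift $J_{m}-J_{m-1}=\frac{1}{m}\ell(X^{m},W^{m},\cdot)-\frac{1}{m(m-1)}\sum_{v=1}^{m-1}\ell(X^{v},W^{v},\cdot)$ is Lipschitz in $Z$ with constant $\theta_{m}=\mathcal{O}(1/m)$, uses the optimality of $Z^{m}$ for $J_{m}$ to get $J_{m-1}(Z^{m})-J_{m-1}(Z^{m-1})\leq\theta_{m}\|Z^{m}-Z^{m-1}\|_{F}$, and then invokes the strong convexity contributed by the $C_{2}\|Z\|_{F}^{2}$ term to convert this function-value gap into the distance bound $\|Z^{m}-Z^{m-1}\|_{F}\leq\theta_{m}/(2C_{2})$. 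You instead exploit the fact that the $z$-subproblem is ridge regression with closed-form solution $z_{i}=H^{-1}b_{i}$, and control the minimizer shift through a resolvent identity plus the elementary $\mathcal{O}(1/m)$ bound on the reweighted partial sums defining $H^{m}-H^{m-1}$ and $b_{i}^{m}-b_{i}^{m-1}$, with $\|(H^{m})^{-1}\|\leq 1/C_{2}$ playing the role of strong convexity. Both arguments rest on the same implicit simplification you flag explicitly: the past view functions $W_{1},\dots,W_{m-1}$ are held identical in the two objectives being compared. What the paper's route buys is generality — it needs no closed form and would survive replacing the squared loss by any loss that is Lipschitz in $Z$ — while your route buys fully explicit constants in $C_{2}$, the data bound, the bound on $\|W_{v}\|$, and $n$, and it sidesteps the paper's glossed-over point that the squared loss is only locally Lipschitz: the uniform bounds on $\|x_{i}^{v}\|$ and $\|W_{v}\|$ that you defer to a preamble are in fact exactly what the paper would also need to justify its constant $\theta_{m}$, and your suggested derivation of the $W_{v}$ bound from the trace-norm regularizer (comparing the optimal objective of Problem (\ref{eq:batch}) with its value at the zero solution, which yields $C_{1}\|W_{v}\|_{*}$ bounded by a quantity independent of $m$) does go through, so the proposal is sound once those assumptions are stated.
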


\begin{proof}
Given 
\begin{equation}
J_{m}(Z) = \frac{1}{m}\sum_{v=1}^{m}\ell(X^{v}, W^{v}, Z)+C_{2}\|Z\|_{F}^{2},
\end{equation}
we have
{\small
\begin{equation}\nonumber
\begin{split}
J_{m}(Z) - &J_{m-1}(Z) =  \frac{1}{m}\ell(X^{m}, W^{m}, Z) + \frac{1}{m}\sum_{v=1}^{m-1}\ell(X^{v}, W^{v}, Z)\\
- & \frac{1}{m-1}\sum_{v=1}^{m-1}\ell(X^{v}, W^{v}, Z) \\
= & \frac{1}{m}\ell(X^{m}, W^{m}, Z) - \frac{1}{m(m-1)}\sum_{v=1}^{m-1}\ell(X^{v}, W^{v}, Z).
\end{split}
\end{equation}
}\noindent
Since $\ell(\cdot)$ used in the algorithm is Lipschitz in its last argument, $J_{m}(Z) - J_{m-1}(Z)$ has a Lipschitz constant $\mathcal{O}(1/m)$. Assuming the Lipschitz constant of $J_{m}(Z) - J_{m-1}(Z)$ is $\theta_{m}$, we have
{\small
\begin{equation}\nonumber
\begin{split}
J_{m-1}(Z^{m}) - &J_{m-1}(Z^{m-1}) = J_{m-1}(Z^{m}) - J_{m}(Z^{m}) + J_{m}(Z^{m}) \\
-& J_{m}(Z^{m-1}) + J_{m}(Z^{m-1}) - J_{m-1}(Z^{m})\\
\leq & J_{m-1}(Z^{m}) - J_{m}(Z^{m}) +  J_{m}(Z^{m-1}) - J_{m-1}(Z^{m})\\
\leq & \theta_{m}\|Z^{m}-Z^{m-1}\|_{F}.
\end{split}
\end{equation}
}\noindent
Since $Z^{m-1}$ is the minimum of $J_{m-1}(Z)$, we have
\begin{equation}
J_{m-1}(Z^{m})-J_{m-1}(Z^{m-1}) \geq 2C_{2}\|Z^{m}-Z^{m-1}\|_{F}^{2}.
\end{equation}
Combining the above results, we have,
\begin{equation}
\|Z^{m}-Z^{m-1}\|_{F} \leq \frac{\theta_{m}}{2C_{2}}=\mathcal{O}(1/m),
\end{equation}
which completes the proof.
\end{proof}

Theorem \ref{the:z} reveals that the streaming views are helpful for deriving a stable multi-view model. We next analyze the optimality of the discovered view function for the newly arrived view using the following theorem.

\begin{theorem}\label{the:subspace}
The  optimization steps of Part 1 in Algorithm 1 can guarantee that the  solved function subspaces of the new view converge to a stationary point.
\end{theorem}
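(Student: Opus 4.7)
The plan is to frame Part 1 as an inexact variant of proximal gradient descent on the composite objective $F(W_v) = g(W_v) + C_1\|W_v\|_*$ and then invoke a descent-lemma argument to extract a stationary limit point. First I would note that $g$ is a convex quadratic in $W_v$ whose gradient is Lipschitz continuous with constant $L = \tfrac{2}{nm}\|Z\|_2^2$, so that for any step size $\eta_t \geq L$ the standard quadratic upper bound holds:
\begin{equation}\nonumber
g(W) \leq g(W_v^{t-1}) + \langle \nabla g(W_v^{t-1}), W - W_v^{t-1}\rangle + \tfrac{\eta_t}{2}\|W - W_v^{t-1}\|_F^2 .
\end{equation}
Combined with the nuclear-norm regularizer this yields the usual surrogate model, whose exact minimizer (via SVT) would produce the classical sufficient-descent inequality.

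Next I would argue that the inexact subspace update based on the power method does not destroy this descent property. The key observation is that the algorithm always sets $A_v \leftarrow QR([A_v, \widetilde{A}_v])$ and $B_v \leftarrow QR([B_v, \widetilde{B}_v])$, so the new subspaces always \emph{contain} the previous column and row spaces of $W_v^{t-1}$. Consequently, when we solve Problem~(\ref{eq:pro_sv}) exactly for $S_v$, the resulting $W_v^t = A_v S_v B_v^T$ achieves an objective value no larger than the one obtained by leaving $W_v$ unchanged, and in fact no larger than the objective value of any feasible point of~(\ref{eq:pro_sv}) restricted to those subspaces. I would then show that as long as the power method captures the leading singular directions whose singular values exceed the threshold $C_1/\eta_t$, the exact proximal step's minimizer lies inside the enlarged subspace, so the exact $S_v$ update reproduces the canonical proximal gradient descent step. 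This yields
\begin{equation}\nonumber
F(W_v^t) \leq F(W_v^{t-1}) - \tfrac{\eta_t - L}{2}\|W_v^t - W_v^{t-1}\|_F^2 .
\end{equation}

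From this sufficient descent and the lower bound $F \geq 0$, I would conclude that $\sum_t \|W_v^t - W_v^{t-1}\|_F^2 < \infty$, so $\|W_v^t - W_v^{t-1}\|_F \to 0$. A standard limit-point argument then shows that any cluster point $W_v^\star$ of the bounded sequence satisfies the first-order optimality condition $0 \in \nabla g(W_v^\star) + C_1\partial\|W_v^\star\|_*$, i.e.\ is a stationary point. Because the final SVD splitting $(A_v', S_v', B_v')$ and the QR re-orthonormalization are exact, the function subspaces $(A_v, B_v)$ inherit this limiting behavior.

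The main obstacle is the inexactness of the power method: in principle, the approximate SVT might omit a singular direction that should have been kept, in which case the expanded subspace fails to contain the true proximal minimizer and the descent inequality above is only approximate. I would handle this by quantifying the error $\varepsilon_t$ of the power method and absorbing it into a modified descent inequality $F(W_v^t) \leq F(W_v^{t-1}) - \tfrac{\eta_t - L}{2}\|W_v^t - W_v^{t-1}\|_F^2 + \varepsilon_t$, then requiring that the power method be run with enough iterations so that $\sum_t \varepsilon_t < \infty$. Under this mild condition, the summable-error proximal gradient framework still guarantees convergence to a stationary point, which establishes the theorem.
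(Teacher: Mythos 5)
Your route---viewing Part 1 as an (inexact) proximal gradient scheme on $F(W_v)=g(W_v)+C_1\|W_v\|_*$, using the fact that the QR-expanded subspaces always contain the column and row spaces of $W_v^{t-1}$, and then arguing sufficient descent plus a limit-point argument for stationarity---is sound and is the natural argument for this theorem (the paper itself only defers the proof to its supplement). Two refinements would tighten it. First, you can get the descent part unconditionally, without any appeal to power-method accuracy: the surrogate in Problem~(\ref{eq:pro_sv}) is $\eta_t$-strongly convex in $W=A_vSB_v^T$, the feasible set $\{A_vSB_v^T: S\}$ is a linear subspace containing $W_v^{t-1}$, and the exact minimization over $S_v$ therefore gives $F(W_v^t)\le F(W_v^{t-1})-\tfrac{\eta_t}{2}\|W_v^t-W_v^{t-1}\|_F^2$ whenever $\eta_t\ge L$; so $\sum_t\|W_v^t-W_v^{t-1}\|_F^2<\infty$ holds regardless of how crude the approximate SVT is, and the inexactness only threatens the \emph{stationarity} of the limit, not the descent. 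Second, two points you use implicitly should be made explicit: boundedness of $\{W_v^t\}$ (it follows from coercivity of the nuclear norm together with monotone descent, since the iterates stay in the level set $\{F\le F(W_v^0)\}$), and the accuracy requirement on the power method. The latter is genuinely indispensable: if a singular direction of $W_v^{t-1}-\tfrac{1}{\eta_t}\nabla g(W_v^{t-1})$ with singular value above $C_1/\eta_t$ is persistently missed, the algorithm can stall at a point that is stationary only within the retained subspaces, so the condition that the approximate SVD eventually captures all above-threshold directions (or your summable-error variant $\sum_t\varepsilon_t<\infty$) must be stated as an assumption accompanying the theorem rather than treated as a side remark; with it, passing to the limit in the optimality condition $0\in\nabla g(W_v^\star)+C_1\partial\|W_v^\star\|_*$ goes through as you describe.
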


The proof of Theorem \ref{the:subspace} is listed in supplementary material due to page limitation. This remarkable result shows that the proposed algorithm can efficiently discover the optimal view function subspaces. We next analyze the influence of  perturbation of the latent representations on the view function estimation by the following theorem, whose detailed proof is listed in supplementary material.

\begin{theorem}\label{the:diff_w}
Fixing $\|X^{v}\|\leq \Upsilon$ for each view. Given latent representation $Z$ with $\|Z\|_{F}\leq \Omega$, the optimal view function on the $v$-th view is denoted as $W_{v}$. For $\widetilde{Z}$ with $\|\widetilde{Z}-Z\|_{F}\leq \epsilon$, the optimal view function on the $v$-th view is defined as $W_{v}^{'}$. Suppose the smallest eigenvalue of $\widetilde{Z}\widetilde{Z}^{T}$ is lower bounded by $\lambda>0$, and both the rank of $W_{v}$ and $W_{v}^{'}$ are lower than $k$. The following error bound holds
\begin{equation}
\|W_{v}^{'}-W_{v}\|_{F} \leq  \frac{1}{\lambda} \big(\Upsilon^{2}\frac{2\epsilon\Omega+\epsilon^{2}}{C_{1}} + \epsilon\Upsilon+ 2\sqrt{k+1}\big)
\end{equation}
\end{theorem}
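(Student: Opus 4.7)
The plan is to compare the two first-order optimality conditions that define $W_v$ and $W_v'$, convert their difference into an equation in which $(W_v'-W_v)\widetilde Z\widetilde Z^T$ appears on the left-hand side, and then invert using the eigenvalue lower bound on $\widetilde Z\widetilde Z^T$.

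First I would record the KKT conditions. Since $W_v$ minimises $\frac{1}{nm}\|X^v-WZ\|_F^2+C_1\|W\|_*$ and $W_v'$ minimises the same objective with $Z$ replaced by $\widetilde Z$, there exist subgradients $G\in\partial\|W_v\|_*$ and $G'\in\partial\|W_v'\|_*$ satisfying
\begin{equation*}
W_v ZZ^T + \tfrac{nmC_1}{2}G = X^v Z^T, \qquad W_v' \widetilde Z\widetilde Z^T + \tfrac{nmC_1}{2}G' = X^v \widetilde Z^T.
\end{equation*}
Subtracting these and collecting the $\widetilde Z\widetilde Z^T$ factor on the left gives the central identity
\begin{equation*}
(W_v'-W_v)\widetilde Z\widetilde Z^T = X^v(\widetilde Z^T-Z^T) - W_v(\widetilde Z\widetilde Z^T-ZZ^T) - \tfrac{nmC_1}{2}(G'-G).
\end{equation*}

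Second, I would right-multiply by $(\widetilde Z\widetilde Z^T)^{-1}$, whose spectral norm is at most $1/\lambda$ by hypothesis, take Frobenius norms, and bound the three pieces separately. The first piece gives $\|X^v(\widetilde Z^T-Z^T)\|_F\le \Upsilon\epsilon$, producing the $\epsilon\Upsilon$ summand. For the second, the telescoping identity $\widetilde Z\widetilde Z^T-ZZ^T=(\widetilde Z-Z)\widetilde Z^T+Z(\widetilde Z-Z)^T$ together with $\|\widetilde Z\|_F\le\Omega+\epsilon$ yields $\|\widetilde Z\widetilde Z^T-ZZ^T\|_F\le 2\epsilon\Omega+\epsilon^2$, while an \emph{a priori} optimality comparison of $W_v$ against the feasible point $W=0$ in the defining objective gives $C_1\|W_v\|_*\le \Upsilon^2$ and hence $\|W_v\|_F\le\Upsilon^2/C_1$. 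Multiplying these two estimates produces exactly the term $\Upsilon^2(2\epsilon\Omega+\epsilon^2)/C_1$.

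Third, the remaining piece involving $\|G'-G\|_F$ is the one I expect to be the main obstacle. Any subgradient of the nuclear norm at a rank-$r$ matrix decomposes as $UV^T + T$ with $U^T T=0$, $TV=0$, and spectral norm at most one; the piece $UV^T$ has Frobenius norm $\sqrt{r}\le\sqrt{k}$, but $T$ is not generally controlled in Frobenius norm by $k$. The workaround is to choose the particular subgradient dictated by the KKT equation, namely the one whose row and column spaces are tied to those of $W_v$ (resp.\ $W_v'$), and to exploit the rank-$\le k$ hypothesis on both matrices. Together with an absorption of the $nmC_1/2$ prefactor into the normalisation, this yields $\|G\|_F,\|G'\|_F\le\sqrt{k+1}$ and hence $\|G-G'\|_F\le 2\sqrt{k+1}$. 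Summing the three bounds through the overall $1/\lambda$ prefactor then delivers the stated inequality.
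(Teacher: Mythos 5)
Your overall route --- subtracting the two first-order optimality conditions, isolating $(W_v'-W_v)\widetilde Z\widetilde Z^T$, inverting via the eigenvalue bound, and splitting into the data-perturbation term, the $W_v(\widetilde Z\widetilde Z^T-ZZ^T)$ term, and the subgradient-difference term --- is the natural one and it reproduces the first two summands of the stated bound exactly ($\epsilon\Upsilon$ and $\Upsilon^2(2\epsilon\Omega+\epsilon^2)/C_1$, the latter via the sound comparison with $W=0$ giving $\|W_v\|_F\le\|W_v\|_*\le\Upsilon^2/C_1$). The paper relegates its detailed proof to the supplementary material, so I can only compare against the theorem itself, but your derivation of those two terms is convincing and almost certainly parallels the intended argument.

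The genuine gap is the third term, and it is exactly where you flagged it. Two things are missing. First, the claim $\|G\|_F,\|G'\|_F\le\sqrt{k+1}$ is not justified: a subgradient of the nuclear norm at a rank-$r$ matrix has the form $UV^T+T$ with $U^TT=0$, $TV=0$, $\|T\|_2\le 1$, and while $\|UV^T\|_F=\sqrt{r}\le\sqrt{k}$, the orthogonal component $T$ is controlled only in spectral norm; its Frobenius norm can be as large as $\sqrt{\min(D_v,d)-r}$. The rank hypothesis in the theorem constrains $W_v$ and $W_v'$, not $T$, and "choosing the subgradient dictated by the KKT equation" does not help by itself: that subgradient's $T$-part is (up to scaling) the projection of $X^vZ^T$ onto the complements of the column and row spaces of $W_v$, whose rank and Frobenius norm are not bounded by $k$ without further argument. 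Second, the prefactor does not disappear: with the paper's objective the stationarity condition reads $W_vZZ^T-X^vZ^T=-\tfrac{nmC_1}{2}G$, so after inverting $\widetilde Z\widetilde Z^T$ the last contribution is $\tfrac{1}{\lambda}\cdot\tfrac{nmC_1}{2}\|G'-G\|_F$; "absorbing it into the normalisation" is not a legitimate step, especially since $C_1$ survives explicitly in the denominator of your second term, so the scaling cannot be silently renormalised away. As it stands your argument yields a bound whose third term is $\tfrac{nmC_1}{2}\|G'-G\|_F/\lambda$, and turning that into $2\sqrt{k+1}/\lambda$ requires either a concrete bound on the Frobenius norm (and effective rank) of the specific KKT subgradients, or a reformulation that avoids differencing subgradients altogether; neither is supplied.
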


This theoretical analysis allows us to summarize as follows. For the newly arrived view, the proposed algorithm is guaranteed to discover its optimal view function based on the convergence analysis in Theorem \ref{the:subspace}. According to Theorem \ref{the:z}, the learned latent representation will become increasingly stable with more streaming views. Hence, given a small perturbation on latent representation matrix $Z$, the difference between the target view functions is also small. Most importantly, according to perturbation theory \citep{li1998relative}, it is thus reasonable to assume that the view function subspaces are approximately consistent given the bounded perturbation on the matrix; therefore, it is feasible to simply fine tune the combination weights of these subspaces for better reconstruction. On the other hand, if the number of past views is small, we can use the standard multi-view learning algorithm to re-train the model over past views and the new views together with acceptable resource cost.

\section{Experiments}

We  next evaluated the proposed SVL algorithms for clustering and classification of real-world datasets. The SVL algorithm was compared to  canonical correlation analysis (CCA) \citep{hardoon2004canonical}, the convex multi-view subspace learning algorithm (MCSL) \citep{white2012convex}, the factorized latent sparse with structured sparsity algorithm (FLSSS) \citep{jia2010factorized}, and the shared Gaussian process latent variable model (sGPLVM) \citep{shon2005learning}. Since these comparison algorithms were not designed for the streaming view setting,  we adapted the  algorithms for fair comparison such that they  employed the idea of multi-view learning to handle new views. Specifically, for each multi-view comparison algorithm,  the outputs of the well-trained multi-view model over past views were treated as temporary views, which were then combined with the newly arrived view to train a new multi-view model. Note that we did not adopt the trick to completely re-train multi-view learning algorithms using past views and new views simultaneously, since it is infeasible for practical applications considering the intensive computational cost.

The real-world datasets used in experiments were the \textit{Handwritten Numerals} and \textit{PASCAL VOC'07} datasets. The  Handwritten Numerals dataset is composed of $2,000$ data points in 0 to 9 ten-digit classes, where each class contains 200 data points. Six types of features are employed to describe the data: Fourier coefficients of the character shapes (FOU),  profile correlations (FAC),  Karhunen-Lo$\grave{e}$ve coefficients (KAR),  pixel averages in $2\times 3$ windows (PIX),  Zernike moments (ZER), and  morphological  features (MOR). The \textit{PASCAL VOC'07} dataset contains around $10,000$ images, each of which was annotated with 20 categories. Sixteen types of features have been used to describe each image including GIST, image tags, 6 color histograms (RGB, LAB, and HSV over single-scale or multi-scale images), and 8 bag-of-features descriptors (SIFT and hue densely extracted or for Harries-Laplacian interest points on single-scale or multi-scale images).

\subsection{Multi-view Clustering and Classification}
For each algorithm, half of the total views were used for  initialization to well train a base multi-view model, and then multi-view learning was conducted with the streaming views. We fixed the dimension of the latent subspace as $100$ for different algorithms. Based on the multi-view example subspaces learned through the proposed SVL algorithm and its comparison algorithms, the k-means and SVM methods were launched for  subsequent clustering and classification, respectively. Clustering performance was assessed by normalized mutual information (NMI) and accuracy (ACC), while classification performance was measured using mean  averaged precision (mAP).

\begin{table*}[!tbh] %\setlength{\tabcolsep}{3pt}
\caption{NMI on the \textit{Handwritten Numerals} dataset. The number following each feature denotes  the number of new views  processed. `Single' implies directly launching k-means on the current view.}
%\vskip 0.1in
\label{tab:clustering}
\begin{center}
%\begin{small}
\begin{tabular}{ccccccc}
\hline
 Algorithm  & FAC &  FOU  & KAR (0) & MOR (1)  & PIX (2) & ZER (3)\\
\hline
\hline
Single   & $0.679\pm 0.032$  & $0.547\pm 0.028$ & $0.666\pm 0.030$ & $0.643\pm 0.034$ & $0.703\pm 0.040$ & $0.512\pm 0.025$ \\
CCA       & - & - & $0.755\pm 0.039$ & $0.777\pm 0.038$ & $0.772\pm 0.061$ & $0.799\pm 0.038$ \\
FLSSS     & - & - & $0.833\pm 0.047$ & $0.837\pm 0.029$ & $0.830\pm 0.028$ & $0.840\pm 0.027$ \\
sGPLVM  & - & - & $0.785\pm 0.044$ & $0.794\pm 0.021$ & $0.799\pm 0.056$ & $0.827\pm 0.055$ \\
MCSL     & - & - & $0.798\pm 0.028$ & $0.805\pm 0.047$ & $0.814\pm 0.069$ & $0.815\pm 0.026$ \\
SVL       & - & - & $0.826\pm 0.049$ & $0.840\pm 0.044$ & $0.866\pm 0.037$ & $0.871\pm 0.042$ \\
\hline
\end{tabular}\vskip -0.2in
%\end{small}
\end{center}\vskip -0.1in
\end{table*}

\begin{table*}[!tbh] \vskip -0.2in%\setlength{\tabcolsep}{3pt}
\caption{ACC on the \textit{Handwritten Numerals} dataset. The number following each feature  denotes the number of new views processed. `Single' implies directly launching k-means on the current view.}
%\vskip 0.1in
\label{tab:clustering_acc}
\begin{center}
%\begin{small}
\begin{tabular}{ccccccc}
\hline
 Algorithm  & FAC   & FOU  & KAR (0) & MOR (1) &  PIX (2) & ZER (3) \\
\hline
\hline
Single    & $0.707\pm 0.065$ & $0.556\pm 0.062$  & $0.689\pm 0.051$ & $0.614\pm 0.058$ & $0.694\pm 0.067$ & $0.534\pm 0.052$ \\
CCA       & - & - & $0.709\pm 0.051$ & $0.710\pm 0.015$ & $0.706\pm 0.037$ & $0.809\pm 0.065$ \\
FLSSS     & - & - & $0.819\pm 0.038$ & $0.831\pm 0.035$ & $0.851\pm 0.027$ & $0.849\pm 0.045$ \\
sGPLVM  & - & - & $0.777\pm 0.052$ & $0.796\pm 0.035$ & $0.788\pm 0.053$ & $0.805\pm 0.058$ \\
MCSL     & - & - & $0.790\pm 0.045$ & $0.804\pm 0.055$ & $0.804\pm 0.036$ & $0.836\pm 0.050$ \\
SVL       & - & - & $0.813\pm 0.052$ & $0.816\pm 0.032$ & $0.908\pm 0.050$ & $0.927\pm 0.050$ \\
\hline
\end{tabular}
%\end{small}
\end{center}\vskip -0.1in
\end{table*}

\begin{table}[th] \vskip -0.2in%\setlength{\tabcolsep}{3pt}
\caption{mAP on the \textit{PASCAL VOC'07} dataset. `8' means that multi-view learning begins with 8 views, while the following number denotes the number of new views processed.}
%\vskip 0.1in
\label{tab:classification}
\begin{center}
%\begin{small}
\begin{tabular}{cccccc}
\hline
 Algorithm  & $8(0)$   & $8(2)$   & $8(4)$  & $8(6)$  & $8(8)$ \\
\hline
\hline
CCA       & $0.314$ & $0.335$ & $0.347$ & $0.357$ & $0.368$  \\
FLSSS     & $0.507$ & $0.518$ & $0.521$ & $0.532$ & $0.539$  \\
sGPLVM  & $0.441$ & $0.458$ & $0.473$ & $0.487$ & $0.487$  \\
MCSL      & $0.448$ & $0.463$ & $0.469$ & $0.475$ & $0.480$  \\
SVL        & $0.554$  & $0.558$ & $0.562$ & $0.564$ & $0.571$  \\
\hline
\end{tabular}
%\end{small}
\end{center}\vskip -0.15in
\end{table}

The performance  of  different algorithms with respect to the progress of streaming view learning of the clustering task are shown in Tables \ref{tab:clustering} and \ref{tab:clustering_acc}. Classification results under  similar settings are presented in Table \ref{tab:classification}. In each table, multi-view learning algorithms learn the views from the left to the right column in a streaming manner, such that the results presented on the right side have already been helped by the views on the left. The different multi-view learning algorithms  consistently improve  clustering and classification performance when more new view information becomes available. Although the base multi-view learning model of the proposed SVL algorithm only achieves comparable or slightly inferior performance to that of comparison algorithms, SVL  significantly improves its performance by optimally learning new view functions and upgrading past view functions, such that the advantages of SVL becomes more obvious with increasing of numbers of new views processed. Specifically, in the fifth column of Table \ref{tab:clustering}, the NMI of SVL  improves about $20\%$ over that of single-view algorithm and $5\%$ over that of multi-view FLSSS algorithm. 
%It can be seen from Table \ref{tab:classification} that most of the comparison algorithms were computationally expensive to train multi-view models with 8 views,  thus making it impractical to launch the multi-view learning algorithms  again by simultaneously using the past 8 views and the new views. 
On the \textit{PASCAL VOC'07} dataset, training multi-view model over 8 views is already computational expensive, let alone completely re-training with new views.

\subsection{Algorithm Analysis}

We next varied the dimensionality $d$ of latent representations. The clustering performance of SVL on the \textit{Handwritten Numerals} dataset is presented in Table \ref{tab:d}.  The performance of the lower-dimensional latent representations is limited, whereas with  increased $d$ the latent representations  have  more power to describe  multi-view examples, and  the SVL algorithm  achieves stable performance.

\begin{table}[th] \vskip -0.2in\setlength{\tabcolsep}{3pt}
\caption{NMI of SVL with different dimensionalities of latent representations on the \textit{Handwritten Numerals} dataset.}
%\vskip 0.1in
\label{tab:d}
\begin{center}
%\begin{small}
\begin{tabular}{ccccccc}
\hline
 $d$  & FAC   & FOU  & KAR (0) & MOR (1) &  PIX (2) & ZER (3) \\
\hline
\hline
10     & - & - & $0.655$ & $0.655$ & $0.688$ & $0.705$\\
20     & - & - & $0.680$ & $0.762$ & $0.771$ & $0.772$\\
50     & - & - & $0.755$ & $0.778$ & $0.809$ & $0.838$\\
100     & - & - & $0.826$ & $0.840$ & $0.866$ & $0.871$\\
150     & - & - & $0.829$ & $0.834$ & $0.865$ & $0.888$\\
\hline
\end{tabular}
%\end{small}
\end{center}
\end{table}

\begin{figure}[!thb]\vskip -0.2in
\begin{center}
%\fbox{\rule{0pt}{2in} \rule{0.9\linewidth}{0pt}}
   \includegraphics[width=0.8\columnwidth]{./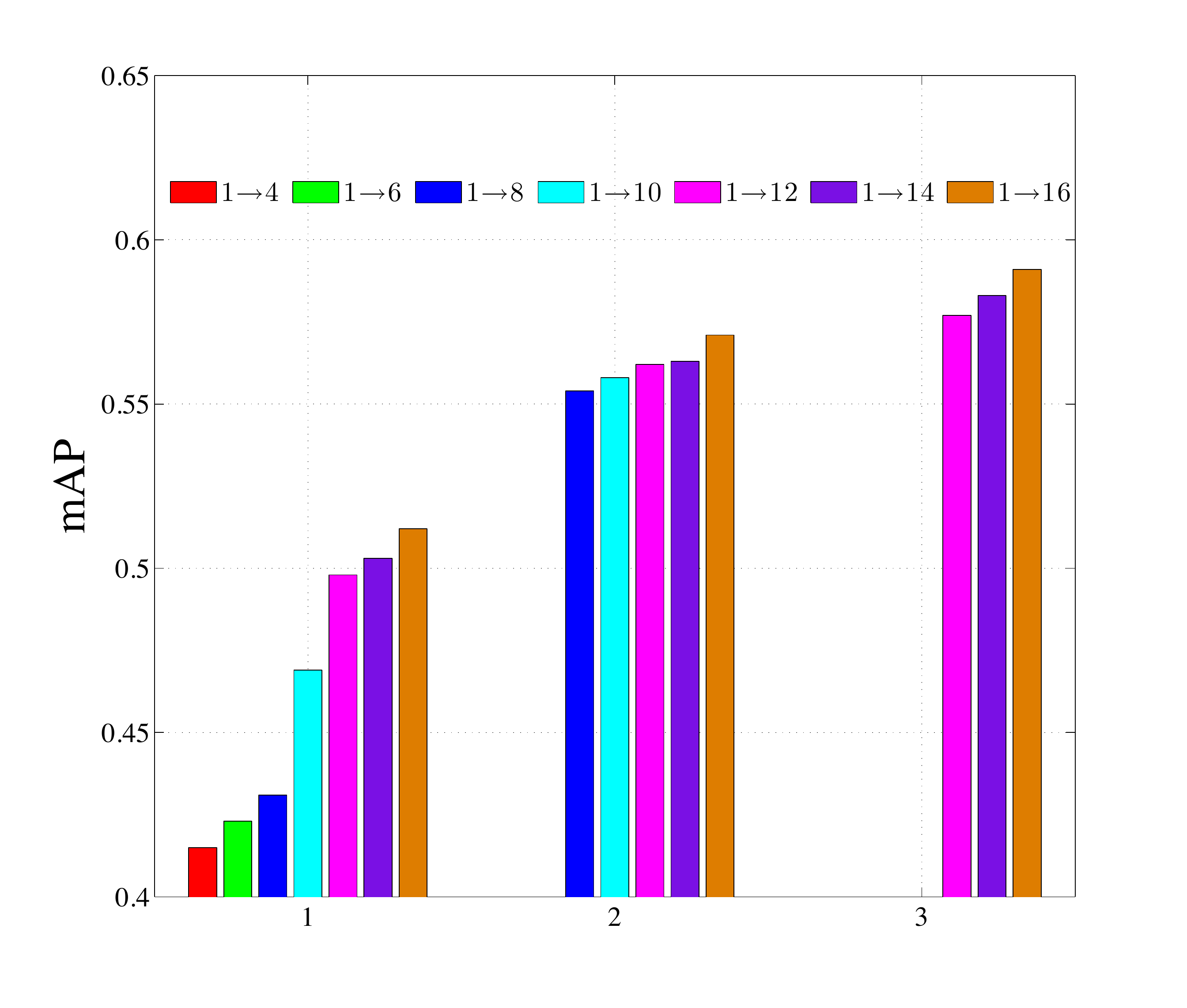}
\end{center}\vskip -0.2in
   \caption{Classification performance of SVL using different numbers of views for initialization on the \textit{PASCAL VOC'07} dataset.}
\label{fig:m}\vskip -0.2in
\end{figure}

\begin{figure}[!thb]\vskip -0.2in
\begin{center}
%\fbox{\rule{0pt}{2in} \rule{0.9\linewidth}{0pt}}
   \includegraphics[width=0.8\columnwidth]{./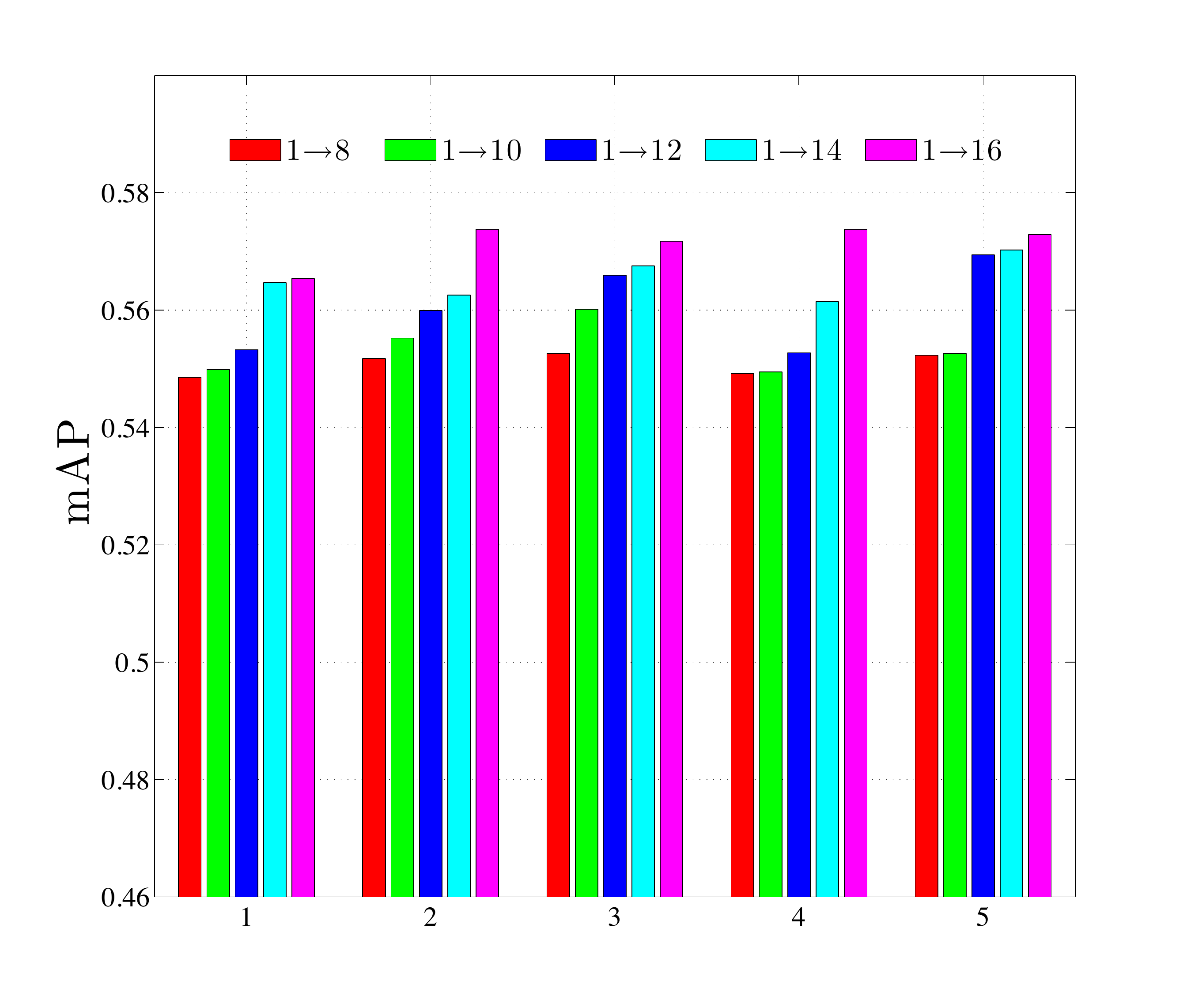}
\end{center}\vskip -0.2in
   \caption{Classification Performance of SVL with distinct view orders on the PASCAL VOC'07 dataset. }\vskip -0.1in
\label{fig:order}
\end{figure}

To examine the influence of the number of views used to initialize the base multi-view SVL model, we started SVL with different  numbers of views on the \textit{PASCAL VOC'07} dataset. The variability in performance  is presented in Figure \ref{fig:m}. If the base multi-view model was initialized with insufficient views, the resulting performance was limited (see the first group in Figure \ref{fig:m}). This is due to the large estimation error of the  functions over the views used to initialize the model. Conversely, if the multi-view model over past views was already well trained, we  easily applied SVL to extend the model to handle new views without intensive computational cost, whilst also guaranteeing  stable performance improvements. These phenomena are  consistent with our theoretical analyses. 

Finally, we examined the influence of the order of streaming views on the learning performance of SVL, and  the classification results are shown in Figure \ref{fig:order}. For each group in Figure \ref{fig:order}, the view order is randomly determined. It can be seen that although classification performance variations is diverse, the resulting performances are roughly equivalent with distinct streaming view orders.

\section{Conclusions}
\vspace{-0.05in}
Here  we investigate the streaming view problem in multi-view learning, in which views arrive in a streaming manner. Instead of discarding the multi-view model trained well over past views, we regard the subspaces of the well-trained view functions as stable and fine-tune the weights for function subspaces combinations while processing new views. In this way, the resulting SVL algorithm can efficiently learn view functions for the new view and update view functions for past views. The convergence issue of the proposed algorithm is theoretically studied, and the influence of streaming views on the multi-view model is addressed. Comprehensive experiments conducted on real-world datasets demonstrate the significance of studying the streaming view problem and the effectiveness of the proposed algorithm.

\bibliography{SVL_arxiv.bbl}

\end{document}